\newtheorem{theorem}{Theorem}[section]
\newcommand{\beginsupplement}{%
        \setcounter{table}{0}
        \renewcommand{\thetable}{S\arabic{table}}%
        \setcounter{figure}{0}
        \renewcommand{\thesection}{\Alph{section}}
     }
\ificcvfinal\pagestyle{empty}\fi
\begin{document}

\title{Weakly-supervised Video Anomaly Detection with Robust Temporal Feature Magnitude Learning} 

\author{\parbox{0.7\linewidth}{\centering Yu Tian$^{1,3}$     $\quad$ Guansong Pang$^1$ $\quad$ Yuanhong Chen$^1$   $\quad$  Rajvinder Singh$^3$ $\quad$ Johan W. Verjans$^{1,2,3}$  $\quad$ Gustavo Carneiro$^1$  $\newline$ $^{1}$ Australian Institute for Machine Learning, University of Adelaide \\
 $^{2}$ Faculty of Health and Medical Sciences, University of Adelaide
 \\
  $^{3}$ South Australian Health and Medical Research Institute \\
} 
}
\maketitle
\ificcvfinal\thispagestyle{empty}\fi

\begin{abstract}
Anomaly detection with weakly supervised video-level labels is typically formulated as a multiple instance learning (MIL) problem, in which we aim to identify snippets containing abnormal events, with each video represented as a bag of video snippets. 
Although current methods show effective detection performance, their recognition of the positive instances, i.e., rare abnormal snippets in the abnormal videos, is largely biased by the dominant negative instances, especially when the abnormal events are subtle anomalies that exhibit only small differences compared with normal events.
This issue is exacerbated in many methods that ignore important video temporal dependencies. To address this issue, we introduce a novel and theoretically sound method, named Robust Temporal Feature Magnitude learning (RTFM), which trains a feature magnitude learning function to effectively recognise the positive instances, substantially improving the robustness of the MIL approach to the negative instances from abnormal videos. RTFM also adapts dilated convolutions and self-attention mechanisms to capture long- and short-range temporal dependencies to learn the feature magnitude more faithfully. 
Extensive experiments show that the RTFM-enabled MIL model (i) outperforms several state-of-the-art methods by a large margin on four benchmark data sets (ShanghaiTech, UCF-Crime, XD-Violence and UCSD-Peds) and (ii) achieves significantly improved subtle anomaly discriminability and sample efficiency.

\end{abstract}

\section{Introduction}
Video anomaly detection has been intensively studied because of its potential to be used in autonomous surveillance systems~\cite{sultani2018real,hasan2016learning,Wu2020not,zhong2019graph}.
\begin{figure}[t]
\begin{center}
\small
  \includegraphics[width=1.0\linewidth]{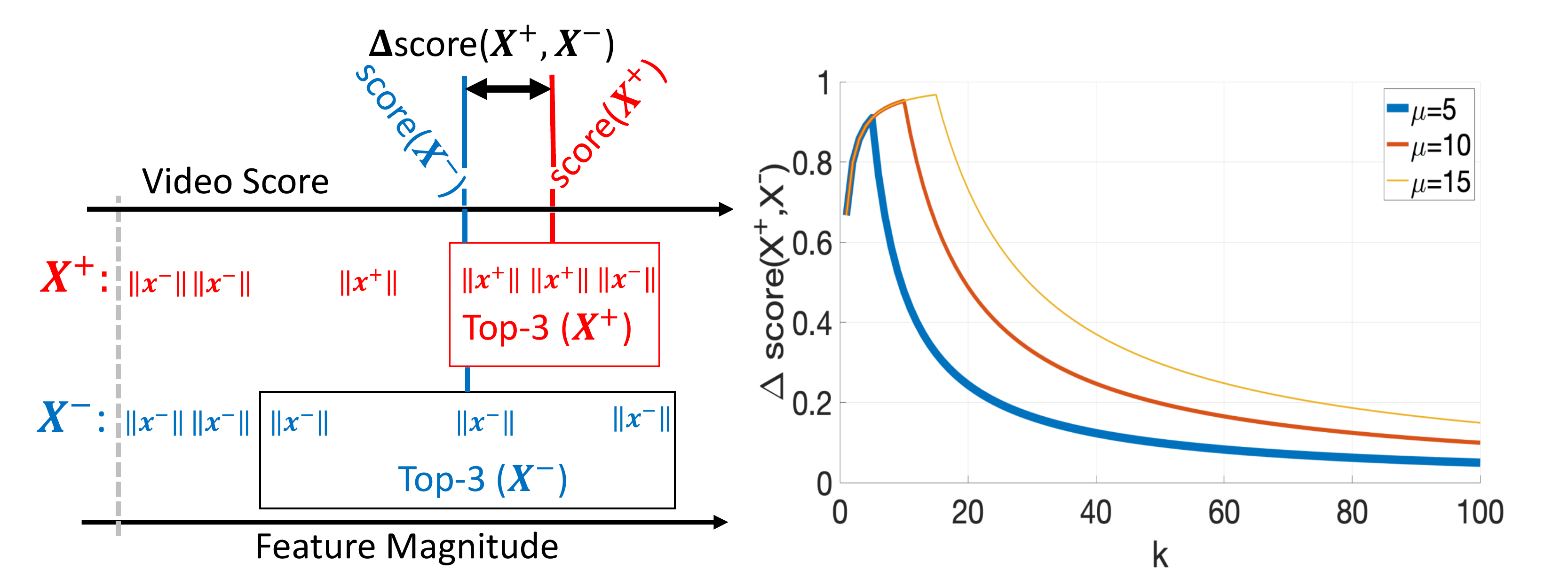}
\end{center}
  \caption{\textbf{RTFM} trains a feature magnitude learning function to improve the robustness of MIL approaches to normal snippets from abnormal videos, and detect  abnormal snippets more effectively. \textbf{Left:} temporal feature magnitudes of abnormal and normal snippets ($\|\mathbf{x}^+\|$ and $\|\mathbf{x}^-\|$), from abnormal and normal videos (\textcolor{red}{$\mathbf{X}^+$} and \textcolor{blue}{$\mathbf{X}^-$}). Assuming that $\mu=3$ denotes the number of abnormal snippets in the anomaly video,  
  we can maximise the $\Delta$score$(\mathbf{X}^+,\mathbf{X}^-)$, which measures the difference between the scores of abnormal and normal videos, by selecting the top $k \le \mu$ snippets with the largest temporal feature magnitude (the scores are computed with the mean of magnitudes of the top $k$ snippets). \textbf{Right:}  the $\Delta$score$(\mathbf{X}^+,\mathbf{X}^-)$ increases with $k\in[1,\mu]$ and then decreases for $k>\mu$, showing evidence that our proposed RTFM-enabled MIL model provides a better separation between abnormal and normal videos when $k \approx \mu$, even if there are a few normal snippets with large feature magnitudes.}
\label{fig:intro}

\end{figure}
The goal of video anomaly detection is to identify the time window when an anomalous event happened -- in the context of surveillance, examples of anomaly are bullying, shoplifting, violence, etc. 
Although one-class classifiers (OCCs, also called unsupervised anomaly detection) trained exclusively with normal videos have been explored in this context~\cite{hasan2016learning,zhang2016video,ravanbakhsh2017abnormal,ravanbakhsh2018plug,luo2017revisit,liu2018future, hinami2017joint}, the best performing approaches explore  
a weakly-supervised setup using training samples with \textit{video-level} label annotations of normal or abnormal~\cite{Wu2020not,zhong2019graph,sultani2018real}. This weakly-supervised setup
targets a better anomaly classification accuracy at the expense of a relatively small human annotation effort, compared with OCC approaches.

One of the major challenges of weakly supervised anomaly detection is how to identify anomalous snippets from a whole video labelled as abnormal. 
This is due to two reasons, namely: 1) the majority of snippets from an abnormal video consist of normal events, which can overwhelm the training process and challenge the fitting of the few abnormal snippets; and 2) abnormal snippets may not be sufficiently different from normal ones, making a clear separation between normal and abnormal snippets challenging.
Anomaly detection trained with multiple-instance learning (MIL) approaches~\cite{sultani2018real,Wu2020not,zhu2019motion,8803657} mitigates the issues above by balancing the training set with the same number of abnormal and normal snippets, where normal snippets are randomly selected from the normal videos and abnormal snippets are the ones with the top anomaly scores from abnormal videos. Although partly addressing the issues above, MIL introduces four problems: 1) the top anomaly score in an abnormal video may not be from an abnormal snippet; 2) normal snippets randomly selected from normal videos may be relatively easy to fit, which challenges training convergence; 3) if the video has more than one abnormal snippet, we miss the chance of having a more effective training process containing more abnormal snippets per video; and 4) the use of classification score provides a weak training signal that does not necessarily enable a good separation between normal and abnormal snippets.
These issues are exacerbated even more in methods that ignore important temporal dependencies~\cite{zhong2019graph,liu2018future,Wu2020not,luo2017revisit}.

To address the MIL problems above, we propose a novel method, named 
Robust Temporal Feature Magnitude (RTFM) learning. 
In RTFM, we rely on the temporal feature magnitude of video snippets, where features with low magnitude represent normal (i.e., negative) snippets and high magnitude features denote abnormal (i.e., positive)) snippets.
RTFM is theoretically motivated by the top-$k$ instance MIL~\cite{li2015multiple} that trains a classifier using $k$ instances with top classification scores from the abnormal and normal videos, but in our formulation, we assume that the mean feature magnitude 
of abnormal snippets is larger than that of normal snippets, instead of assuming separability between the classification scores of abnormal and normal snippets~\cite{li2015multiple}.
RTFM solves the MIL issues above, as follows: 1) the probability of selecting abnormal snippets from abnormal videos increases; 2) the hard negative normal snippets selected from the normal videos will be harder to fit, improving training convergence; 3) it is possible to include more abnormal snippets per abnormal video; and 4) using feature magnitude to recognise positive instances is advantageous compared to MIL methods that use classification scores~\cite{sultani2018real,li2015multiple}, because it enables a stronger learning signal, particularly for the abnormal snippets that have a magnitude that can increase for the whole training process, and the feature magnitude learning can be jointly optimised with the MIL anomaly classification to enforce large margins between abnormal and normal snippets at both the feature representation space and the anomaly classification output space. 
Fig.~\ref{fig:intro} motivates RTFM, showing that the selection of the top-$k$ features (based on their magnitude) can provide a better separation between abnormal and normal videos, when we have more than one abnormal snippet per abnormal video and the mean snippet feature magnitude of abnormal videos is larger than that of normal videos.

In practice, RTFM enforces large margins between the top $k$ snippet features with largest magnitudes from abnormal and normal videos, which has theoretical guarantees to maximally separate abnormal and normal video representations.
These top $k$ snippet features from normal and abnormal videos are then selected to train a snippet classifier. 
To seamlessly incorporate long and short-range temporal dependencies within each video, we combine the learning of long and short-range temporal dependencies with a pyramid of dilated convolutions (PDC)~\cite{yu2015multi} and a temporal self-attention module (TSA)~\cite{wang2018non}.
We validate our RTFM on four anomaly detection benchmark data sets, namely ShanghaiTech~\cite{liu2018future}, UCF-Crime~\cite{sultani2018real}, XD-Violence~\cite{Wu2020not} and UCSD-Peds~\cite{li2013anomaly}. We show that our method outperforms the current SOTAs by a large margin on all benchmarks using different pre-trained features (i.e., C3D and I3D). We also show that our method achieves substantially better sample efficiency and subtle anomaly discriminability than popular MIL methods.


\section{Related Work}

\textbf{Unsupervised Anomaly Detection.} Traditional anomaly detection methods assume the availability of normal training data only and address the problem with one-class classification using handcrafted features~\cite{medioni2001event,basharat2008learning,wang2014learning,zhang2009learning}. 
With the advent of deep learning, more recent approaches use the features from pre-trained deep neural networks~\cite{zhao2020exploring,smeureanu2017deep,pang2020self,tudor2017unmasking,fang2020anomaly}. Others apply constraints on the latent space of normal manifold to learn compact normality representations
~\cite{Markovitz_2020_CVPR,Bergmann_2020_CVPR,Park_2020_CVPR,Bergmann_2019_CVPR,Morais_2019_CVPR,Abati_2019_CVPR,Perera_2019_CVPR,bergman2020classification,zhou2020encoding,Sabokrou_2018_CVPR,ruff2018deep,golan2018deep,wang2019gods,del2016discriminative,Cheng_2015_CVPR,tian2021constrained,liu2020photoshopping,chen2021unsupervised,sun2020discriminative}. 
Alternatively, some approaches depend on data reconstruction using generative models to learn the representations of normal samples by (adversarially) minimising the reconstruction error~\cite{liu2018future,ren2015unsupervised,xu2015learning,ionescu2019object,gong2019memorizing,sabokrou2017deep,Sabokrou_2018_CVPR,morais2019learning,ionescu2019object,Park_2020_CVPR,Burlina_2019_CVPR,venkataramanan2019attention,zong2018deep,Nguyen_2019_ICCV,nguyen2019anomaly}. These approaches assume that unseen anomalous videos/images often cannot be reconstructed well and consider samples of high reconstruction errors to be anomalies.
However, due to the lack of prior knowledge of abnormality, these approaches can overfit the training data and fail to distinguish abnormal from normal events. Readers are referred to \cite{pang2021deep} for a comprehensive review of those anomaly detection approaches. 

\textbf{Weakly Supervised Anomaly Detection.} Leveraging some labelled abnormal samples has shown substantially improved performance over the unsupervised approaches~\cite{tian2020few,sultani2018real,Wu2020not,liu2019margin,ruff2019deep,pang2019deep,pang2018learning,zaheer2020claws,zaheer2021cleaning,zaheer2020self}.
However, large-scale frame-level label annotation is too expensive to obtain. Hence, current SOTA video anomaly detection approaches rely on weakly supervised training that uses cheaper video-level annotations. Sultani et al.~\cite{sultani2018real} proposed the use of video-level labels and introduced the large-scale weakly-supervised video anomaly detection data set, UCF-Crime. 
Since then, this direction has attracted the attention of the research community~\cite{8803657,9102722,Wu2020not}. 

Weakly-supervised video anomaly detection methods are mainly based on the MIL framework~\cite{sultani2018real}. However, most MIL-based methods~\cite{sultani2018real,zhu2019motion,8803657} fail to leverage abnormal video labels as they can be affected by the label noise in the positive bag caused by a normal snippet mistakenly selected as the top abnormal event in an anomaly video. 
To deal with this problem, Zhong et al.~\cite{zhong2019graph} reformulated this problem as a binary classification under noisy label problem and used a graph convolution neural (GCN) network to clear the label noise. Although this paper shows more accurate results than~\cite{sultani2018real}, the training of GCN and MIL is computationally costly, and it can lead to unconstrained latent space (i.e., normal and abnormal features can lie at any place of the feature space) that can cause unstable performance. By contrast, our method has trivial computational overheads compared to the original MIL formulation. Moreover, our method unifies the representation learning and anomaly score learning by an $\ell_2$-norm-based temporal feature ranking loss, enabling better separation between normal and abnormal feature representations, improving the exploration of weak labels compared to previous MIL methods~\cite{sultani2018real,8803657,9102722,Wu2020not,zhu2019motion,zhong2019graph}.

\section{The Proposed Method: RTFM}
\label{sec:architecture_overview}

Our proposed robust temporal feature magnitude (RTFM) approach aims to differentiate between abnormal and normal snippets using weakly labelled videos for training. 
Given a set of weakly-labelled training videos $\mathcal{D} = \{ (\mathbf{F}_i,y_i) \}_{i=1}^{|\mathcal{D}|}$, where $\mathbf{F} \in \mathcal{F} \subset \mathbb{R}^{T \times D}$ 
are pre-computed features (e.g., I3D~\cite{carreira2017quo} or C3D~\cite{tran2015learning}) of dimension $D$ from the $T$ video snippets, and $y \in \mathcal{Y} = \{0,1\}$ denotes the video-level annotation ($y_i=0$ if $\mathbf{F}_i$ is a normal video and $y_i=1$ otherwise).
The model used by RTFM is denoted by $r_{\theta,\phi}(\mathbf{F})=f_{\phi}(s_{\theta}(\mathbf{F}))$ and returns a $T$-dimensional feature $[0,1]^{T}$ representing the classification of the $T$ video snippets into abnormal or normal, with the parameters $\theta,\phi$ defined below.
The training of this model comprises a joint optimisation of an end-to-end multi-scale temporal feature learning, and feature magnitude learning and an RTFM-enabled MIL classifier training, with the loss
\begin{equation}
    \min_{\theta, \phi} \sum_{i,j=1}^{|\mathcal{D}|} 
    \ell_{s}(s_{\theta}(\mathbf{F}_i),(s_{\theta}(\mathbf{F}_j)),y_i,y_j) + 
    \ell_{f}(f_{\phi}(s_{\theta}(\mathbf{F}_i)),y_i),
    \label{eq:main_loss}
\end{equation}
where 
$s_{\theta}:\mathcal{F} \to \mathcal{X}$ is the temporal feature extractor (with $\mathcal{X} \subset \mathbb{R}^{T \times D}$), 
$f_{\phi}:\mathcal{X} \to [0,1]^T$
is the snippet classifier, 
$\ell_{s}(.)$ denotes a loss function 
that maximises the separability between the top-$k$ snippet features from normal and abnormal videos, and $\ell_{f}(.)$ is a loss function to train the snippet classifier $f_{\phi}(.)$ also using the top-$k$ snippet features from normal and abnormal videos. 
Next, we discuss the theoretical motivation for our proposed RTFM, followed by a detailed description of the approach.

\subsection{Theoretical Motivation of RTFM}
\label{sec:theory_RTFM}

Top-$k$ MIL in~\cite{li2015multiple} extends MIL to an environment where positive bags contain a minimum number of positive samples and negative bags also contain positive samples, but to a lesser extent, and it assumes that a classifier can separate positive and negative samples.
Our problem is different because negative bags do not contain positive samples, and we do not make the classification separability assumption.
Following the nomenclature introduced above, a temporal feature extracted from a video is denoted by $\mathbf{X} = s_{\theta}(\mathbf{F})$ in~\eqref{eq:main_loss}, where snippet features are represented by the rows $\mathbf{x}_t$ of $\mathbf{X}$.
An abnormal snippet is denoted by $\mathbf{x}^+ \sim P_{x}^+(\mathbf{x})$, and a normal snippet, $\mathbf{x}^- \sim P_{x}^-(\mathbf{x})$.
An abnormal video $\mathbf{X}^+$ contains $\mu$ snippets drawn from $P_{x}^+(\mathbf{x})$ and $(T-\mu)$ drawn from $P_{x}^-(\mathbf{x})$, and a normal video $\mathbf{X}^-$ has all $T$ snippets sampled from $P_{x}^-(\mathbf{x})$. 

To learn a function that can classify videos and snippets as normal or abnormal, we define a function that classifies a snippet using its magnitude (i.e., we use $\ell_2$ norm to compute the feature magnitude),
where instead of assuming classification separability between normal and abnormal snippets (as assumed in~\cite{li2015multiple}), we make a milder assumption that 
$\mathbb{E}[\| \mathbf{x}^+ \|_2] \ge \mathbb{E}[\| \mathbf{x}^- \|_2]$.  
This means that by learning the snippet feature from $s_\theta(\mathbf{F})$, such that normal ones have smaller feature magnitude than abnormal ones, we can satisfy this assumption. To enable such learning, we rely on an optimisation based on the mean feature magnitude of the top $k$ snippets from a video~\cite{li2015multiple}, 
defined by 
\begin{equation}
    g_{\theta,k}(\mathbf{X}) = \max_{\Omega_k(\mathbf{X}) \subseteq \{ \mathbf{x}_t\}_{t=1}^T} \frac{1}{k}\sum_{\mathbf{x}_t \in \Omega_k(\mathbf{X})} \| \mathbf{x}_t \|_2,
\label{eq:score_top_k_instances}    
\end{equation}
where $g_{\theta,k}(.)$ is parameterised by $\theta$ to indicate its dependency on $s_{\theta}(.)$ to produce $\mathbf{x}_t$,
$\Omega_k(\mathbf{X})$ contains a subset of $k$ snippets from $\{ \mathbf{x}_t\}_{t=1}^T$ and $|\Omega_k(\mathbf{X})|=k$. 
The separability between abnormal and normal videos is denoted by
\begin{equation}
    d_{\theta,k}(\mathbf{X}^+,\mathbf{X}^-)=  g_{\theta,k}(\mathbf{X}^{+}) -  g_{\theta,k}(\mathbf{X}^{-}).
    \label{eq:delta_separability}
\end{equation}
For the theorem below, we define the probability that a snippet from $\Omega_{k}(\mathbf{X}^+)$ is abnormal with $p^+_{k}(\mathbf{X}^+) = \frac{\min(\mu,k)}{k + \epsilon}$, with $\epsilon > 0$ and from normal  $\Omega_{k}(\mathbf{X}^-)$, $p^+_{k}(\mathbf{X}^-) = 0$.  This definition means that it is likely to find an abnormal snippet within the top $k$ snippets in $\Omega_{k}(\mathbf{X}^+)$, as long as $k \le \mu$.

\begin{theorem}[Expected Separability Between Abnormal and Normal  Videos]
\label{thm:expected_separability}
Assuming that $\mathbb{E}[\| \mathbf{x}^+ \|_2] \ge \mathbb{E}[\| \mathbf{x}^- \|_2]$, where $\mathbf{X}^+$ has $\mu$ abnormal samples and $(T-\mu)$ normal samples, where $\mu \in [1,T]$, and $\mathbf{X}^-$ has $T$ normal samples.
Let $D_{\theta,k}(.)$ be the random variable from which the separability scores $d_{\theta,k}(.)$ of~\eqref{eq:delta_separability} are drawn~\cite{li2015multiple}.
\begin{enumerate}
    \item If  $0 < k < \mu$, then
    $$0 \le \mathbb{E}[D_{\theta,k}(\mathbf{X}^+,\mathbf{X}^-)] \le \mathbb{E}[D_{\theta,k+1}(\mathbf{X}^+,\mathbf{X}^-)].$$
    \item For a finite $\mu$, then
    $$ \lim_{k \to \infty} \mathbb{E}[D_{\theta,k}(\mathbf{X}^+,\mathbf{X}^-)] = 0.$$
\end{enumerate}
\end{theorem}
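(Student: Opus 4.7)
The plan is to reduce both statements to algebraic facts about the probability $p_k^+(\mathbf{X}^+) = \min(\mu,k)/(k+\epsilon)$, exploiting the probabilistic abstraction already built into the paper's setup. First I would unwrap $\mathbb{E}[g_{\theta,k}(\mathbf{X}^+)]$: since, by definition, each snippet in $\Omega_k(\mathbf{X}^+)$ is abnormal with probability $p_k^+(\mathbf{X}^+)$ and normal otherwise, applying linearity of expectation to the $k$-term mean in the definition of $g_{\theta,k}$ yields
\begin{equation*}
\mathbb{E}[g_{\theta,k}(\mathbf{X}^+)] = p_k^+(\mathbf{X}^+)\,\mathbb{E}[\|\mathbf{x}^+\|_2] + (1-p_k^+(\mathbf{X}^+))\,\mathbb{E}[\|\mathbf{x}^-\|_2].
\end{equation*}
Because $p_k^+(\mathbf{X}^-)=0$ by hypothesis, the same step gives $\mathbb{E}[g_{\theta,k}(\mathbf{X}^-)] = \mathbb{E}[\|\mathbf{x}^-\|_2]$, and subtraction yields the clean identity
\begin{equation*}
\mathbb{E}[D_{\theta,k}(\mathbf{X}^+,\mathbf{X}^-)] = p_k^+(\mathbf{X}^+)\bigl(\mathbb{E}[\|\mathbf{x}^+\|_2] - \mathbb{E}[\|\mathbf{x}^-\|_2]\bigr).
\end{equation*}

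From here, both claims become statements about how $p_k^+(\mathbf{X}^+)$ varies with $k$, with the sign controlled by the standing assumption $\mathbb{E}[\|\mathbf{x}^+\|_2] \ge \mathbb{E}[\|\mathbf{x}^-\|_2] \ge 0$. For Part~1, I would observe that when $0<k<\mu$, the formula reduces to $p_k^+(\mathbf{X}^+) = k/(k+\epsilon)$, whose derivative in $k$ equals $\epsilon/(k+\epsilon)^2 > 0$; this single monotonicity check simultaneously delivers the lower bound $\mathbb{E}[D_{\theta,k}]\ge 0$ and the sandwich inequality $\mathbb{E}[D_{\theta,k}] \le \mathbb{E}[D_{\theta,k+1}]$. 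For Part~2, I would note that once $k\ge\mu$ the formula becomes $p_k^+(\mathbf{X}^+) = \mu/(k+\epsilon)$, which, for fixed finite $\mu$, tends to $0$ as $k\to\infty$; plugging this into the displayed expression for $\mathbb{E}[D_{\theta,k}]$ makes the limit vanish.

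The main obstacle I expect is the very first step---justifying the reduction from the deterministic maximum defining $g_{\theta,k}$ to a sum of expectations driven by $p_k^+$. This requires committing to the interpretation that, in expectation, the class identity of each element of the maximising subset $\Omega_k$ is governed by $p_k^+$, which is consistent with the paper's own definition of $p_k^+(\mathbf{X}^+) = \min(\mu,k)/(k+\epsilon)$ but is the one genuine modelling assumption in the argument. Once this abstraction is accepted, the remainder of the proof is a short calculation with a rational function in $k$ and no further probabilistic machinery is needed.
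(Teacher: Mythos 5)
Your proposal is correct and follows essentially the same route as the paper's own proof: both decompose $\mathbb{E}[g_{\theta,k}(\mathbf{X}^+)]$ as $p^+_k(\mathbf{X}^+)\,\mathbb{E}[\|\mathbf{x}^+\|_2]+(1-p^+_k(\mathbf{X}^+))\,\mathbb{E}[\|\mathbf{x}^-\|_2]$, subtract $\mathbb{E}[\|\mathbf{x}^-\|_2]$ for the normal video, and then argue Part~1 from the monotonicity of $p^+_k(\mathbf{X}^+)$ in $k<\mu$ and Part~2 from $p^+_k(\mathbf{X}^+)=\mu/(k+\epsilon)\to 0$. Your explicit flagging of the modelling step (that the class identities within $\Omega_k$ are governed in expectation by $p^+_k$) is exactly the assumption the paper also relies on implicitly, so there is no substantive difference.
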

\begin{proof}
    Please see proof in the supplementary material.
\end{proof}
Therefore, the first part of this theorem means that as we include more samples in the top $k$ snippets of the abnormal video, 
the separability between abnormal and normal video tends to increase (even if it includes a few normal samples) as long as $k \le \mu$.  
The second part of the theorem means that as we include more than $\mu$ top instances, the abnormal and normal video scores become indistinguishable because of the overwhelming number of negative samples both in the positive and negative bags.  
Both points are shown in Fig.~\ref{fig:intro}, where score($\mathbf{X}$)=$g_{\theta,k}(\mathbf{X})$, $\Delta$score($\mathbf{X}^+,\mathbf{X}^-$) = $d_{\theta,k}(\mathbf{X}^+,\mathbf{X}^-)$, and $\epsilon=0.4$ to compute $p^+_{k}(\mathbf{X^+})$. This theorem suggests that by maximising the separability of the top-$k$ temporal feature snippets from abnormal and normal videos (for $k \le \mu)$, we can facilitate the classification of anomaly videos and snippets.  It also suggests that the use of the top-$k$ features to train the snippet classifier allows for a more effective training given that the majority of the top-$k$ samples in the abnormal video will be abnormal and that we will have a balanced training using the top-$k$ hardest normal snippets. The final consideration is that because we use just the top-$k$ samples per video, our method is efficiently optimised with a relatively small amount of training samples. 

\begin{figure}
\begin{center}
\includegraphics[width=1.0\linewidth]{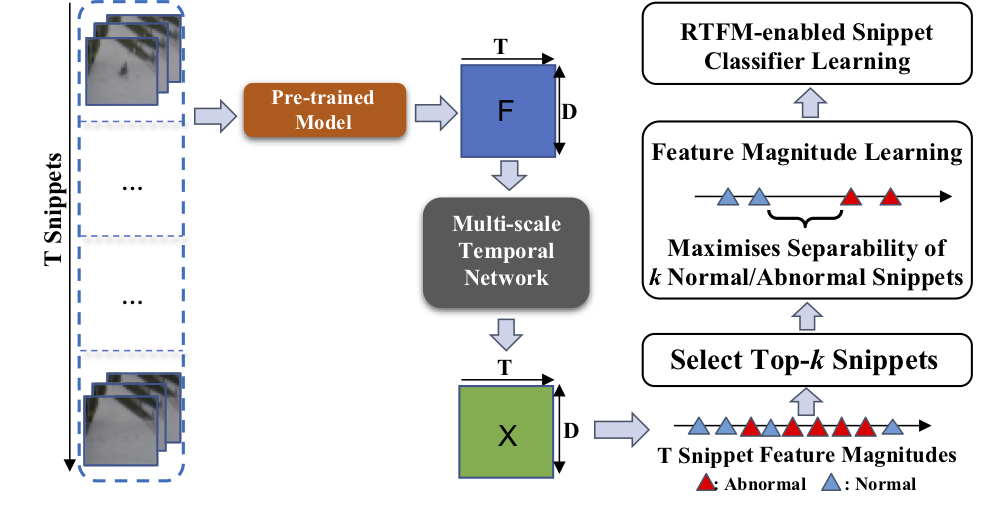}
\end{center}
  \caption{Our proposed RTFM receives a $T \times D$ feature matrix $\mathbf{F}$ extracted from a video containing $T$ snippets. Then, MTN captures the long and short-range temporal dependencies between snippet features to produce $\mathbf{X}=s_{\theta}(\mathbf{F})$. Next, we maximise the separability between abnormal and normal video features and train a snippet classifier using the top-$k$ largest magnitude feature snippets from abnormal and normal videos. 
  }
\label{fig:framework}
\end{figure}

\subsection{Multi-scale Temporal Feature Learning}
\label{sec:temporal}
Inspired by the attention techniques used in video understanding~\cite{wang2018non,8451429}, our proposed multi-scale temporal network (MTN) captures the multi-resolution local temporal dependencies and the global temporal dependencies between video snippets (we depict MTN in Fig.1 of the supplementary material). MTN uses a pyramid of dilated convolutions over the time domain to learn multi-scale representations for video snippets. 
Dilated convolution is usually applied in the spatial domain with the goal of expanding the receptive field without losing resolution~\cite{yu2015multi}. Here we propose to use dilated convolutions over the temporal dimension as it is important to capture the multi-scale temporal dependencies of neighbouring video snippets for anomaly detection.


MTN learns the multi-scale temporal features from the pre-computed fetures
$\mathbf{F}=[\mathbf{f}_d]_{d=1}^D$.
Then given the feature $\mathbf{f}_d \in \mathbb{R}^{T}$, the 1-D dilated convolution operation with kernel $\mathbf{W}^{(l)}_{k,d} \in \mathbb{R}^{W}$ with $k \in \{1,...,D/4\}$, $d \in \{1,...,D\}$, $l \in \{ \text{PDC}_1,\text{PDC}_2,\text{PDC}_3 \}$, and $W$ denoting the filter size, is defined by
\begin{equation}
   \mathbf{f}^{(l)}_k = \sum_{d=1}^{D} \mathbf{W}^{(l)}_{k,d} *^{(l)} \mathbf{f}_d,
    \label{eq:dilated}
\end{equation}
where $*^{(l)}$ represents the dilated convolution operator indexed by $l$,
$\mathbf{f}^{(l)}_k \in \mathbb{R}^{T}$  represents the output features after applying the dilated convolution over the temporal dimension. 
The dilation factors for $\{ \text{PDC}_1,\text{PDC}_2,\text{PDC}_3 \}$ are $\{1,2,4\}$, respectively (this is shown in Fig.1 of the supplementary material).

The global temporal dependencies between video snippets is achieved with a self-attention module, which has shown promising performance on capturing the long-range spatial dependency on video understanding~\cite{wang2018non}, image classification~\cite{zhao2020exploring} and object detection~\cite{perreault2020spotnet}. 
Motivated by the previous works using GCN to model global temporal information~\cite{zhong2019graph,Wu2020not}, we re-formulate the spatial self-attention technique to work on the time dimension and capture global temporal context modelling. In detail, we aim to produce an attention map $\mathbf{M} \in \mathbb{R}^{T \times T}$ that estimates the pairwise correlation between snippets. 
Our temporal self-attention (TSA) module first uses a $1 \times 1$ convolution to reduce the spatial dimension from $\mathbf{F} \in \mathbb R^{T \times D}$ to $\mathbf{F}^{(\text{c})} \in \mathbb{R}^{T \times D/4}$ with $\mathbf{F}^{(\text{c})}=Conv_{1\times 1}(\mathbf{F})$. 
We then apply three separate $1 \times 1$ convolution layers to $\mathbf{F}^{(\text{c})}$ to produce 
$\mathbf{F}^{(\text{c1})},\mathbf{F}^{(\text{c2})},\mathbf{F}^{(\text{c3})} \in \mathbb{R}^{T \times D/4}$, as in
$\mathbf{F}^{(\text{ci})}=Conv_{1\times 1}(\mathbf{F}^{(\text{c})})$ for $i \in \{1,2,3\}$.
The attention map is then built with $\mathbf{M} = \left ( \mathbf{F}^{(\text{c1})} \right ) \left ( \mathbf{F}^{(\text{c2})} \right )^{\intercal}$, which 
produces $\mathbf{F}^{(\text{c4})} = Conv_{1\times 1}(\mathbf{M}\mathbf{F}^{(\text{c3})})$.

A skip connection is added after this final $1 \times 1$ convolutional layer, as in
\begin{equation}
    \mathbf{F}^{(\text{TSA})} = \mathbf{F}^{(\text{c4})} + \mathbf{F}^{(\text{c})}. 
    \label{eq:sa}
\end{equation}

The output from the MTN is formed with a concatenation of the outputs from the PDC and MTN modules $\bar{\mathbf{F}} = [\mathbf{F}^{(l)}]_{l \in \mathcal{L}} \in \mathbb R^{T \times D}$, with $\mathcal{L}=\{ \text{PDC}_{1}, \text{PDC}_{2},\text{PDC}_{3},\text{TSA} \}$.  A skip connection using the original features $\mathbf{F}$ produces the final temporal feature representation $\mathbf{X} = s_{\theta}(\mathbf{F}) = \bar{\mathbf{F}} + \mathbf{F}$,
where the parameter $\theta$ comprises the weights for all convolutions described in this section.

\subsection{Feature Magnitude Learning}
\label{sec:top_k_MIL} 

Using the theory introduced in Sec.~\ref{sec:theory_RTFM}, we propose a loss function to model $s_{\theta}(\mathbf{F})$ in~\eqref{eq:main_loss}, where the top $k$ largest snippet feature magnitudes from normal videos are minimised and the top $k$ largest snippet feature magnitudes from abnormal videos are maximised.
More specifically, we propose the following loss $\ell_s(.)$ from~\eqref{eq:main_loss} that maximises the separability between normal and abnormal videos:
\begin{equation}
\begin{split}
   \ell_{s}&(s_{\theta}(\mathbf{F}_i),s_{\theta}(\mathbf{F}_j),y_i,y_j)  = \\
&\left\{
	\begin{array}{ll}
   \max \Big ( 0, m - d_{\theta,k}(\mathbf{X}_i,\mathbf{X}_j)\Big ) &, \text{if } y_i=1,y_j=0 \\
   0 &, \text{otherwise}
   \end{array}
   \right.
\end{split}   
    \label{eq:L2C}
\end{equation}
where $m$ is a pre-defined margin, $\mathbf{X}_i=s_{\theta}(\mathbf{F}_i)$ is the abnormal video feature (similarly for $\mathbf{X}_j$ for a normal video),
and $d_{\theta,k}(.)$ represents separability function defined in~\eqref{eq:delta_separability} that computes the difference between 
the score of the top $k$ instances, from $g_{\theta,k}(.)$ in~\eqref{eq:score_top_k_instances}, of the abnormal and normal videos.

\subsection{RTFM-enabled Snippet Classifier Learning}

To learn the snippet classifier, we train a binary cross-entropy-based classification loss function using the set $\Omega_{k}(\mathbf{X})$ that contains the $k$ snippets with the largest $\ell_2$-norm features from $s_{\theta}(\mathbf{F})$ in~\eqref{eq:main_loss}.
In particular, the loss $\ell_f(.)$ from~\eqref{eq:main_loss} is defined as
\begin{equation}
\begin{split}
  &\ell_{f}(f_{\phi}(s_{\theta}(\mathbf{F})),y) =\\
  &\sum_{\mathbf{x} \in \Omega_{k}(\mathbf{X})} -(y\log(f_{\phi}(\mathbf{x})) + (1-y)\log(1-f_{\phi}(\mathbf{x}))),
\end{split}
    \label{eq:BCE}
\end{equation}
where $\mathbf{x}=s_{\theta}(\mathbf{f})$.
Note that following~\cite{sultani2018real}, $\ell_{f}(.)$ is accompanied by the temporal smoothness and sparsity regularisation, with the temporal smoothness defined as $\big(f_{\phi}(s_{\theta}(\mathbf{f}_t))-f_{\phi}(s_{\theta}(\mathbf{f}_{t-1}))\big)^{2}$ to enforce similar anomaly score for neighbouring snippets, while the sparsity regularisation defined as $\sum_{t = 1}^T |f_{\phi}(s_{\theta}(\mathbf{f}_t))|$ to impose a prior that abnormal events are rare in each abnormal video.
\section{Experiments}

\subsection{Data Sets and Evaluation Measure}

Our model is evaluated on four multi-scene benchmark datasets, created for the weakly supervised video anomaly detection task: ShanghaiTech~\cite{liu2018future}, UCF-Crime~\cite{sultani2018real}, XD-Violence~\cite{Wu2020not} and UCSD-Peds~\cite{xu2014video}.

\textbf{UCF-Crime} is a large-scale anomaly detection data set~\cite{sultani2018real} that contains 1900 untrimmed videos with a total duration of 128 hours from real-world street and indoor surveillance cameras. Unlike the static backgrounds in ShanghaiTech, UCF-Crime consists of complicated and diverse backgrounds.
Both training and testing sets contain the same number of normal and abnormal videos. The data set covers 13 classes of anomalies in 1,610 training videos with video-level labels and 290 test videos with frame-level labels. 

\textbf{XD-Violence} is a recently proposed large-scale multi-scene anomaly detection data set, collected from real life movies, online videos, sport streaming, surveillance cameras and CCTVs~\cite{Wu2020not}. The total duration of this data set is over 217 hours, containing 4754 untrimmed videos with video-level labels in the training set and frame-level labels in the testing set. It is currently the largest publicly available video anomaly detection data set.

\textbf{ShanghaiTech} is a medium-scale data set from fixed-angle street video surveillance. It has 13 different background scenes and 437 videos, including 307 normal videos and 130 anomaly videos. The original data set~\cite{liu2018future} is a popular benchmark for the anomaly detection task that assumes the availability of normal training data.
Zhong et al.~\cite{zhong2019graph} reorganised the data set by selecting a subset of anomalous testing videos into training data to build a weakly supervised training set,
so that both training and testing sets cover all 13 background scenes. We use exactly the same procedure as in~\cite{zhong2019graph} to convert ShanghaiTech for the weakly supervised setting.

\textbf{UCSD-Peds} is a small-scale dataset combined by two sub-datasets -- Ped1 with 70 videos and Peds2 with 28 videos. Previous work~\cite{zhong2019graph,he2018anomaly} re-formulate the dataset for weakly supervised anomaly detection by randomly selecting 6 anomaly videos and 4 normal videos into the train set, with the remaining as test set. We report the mean results over 10 times of this process.


\textbf{Evaluation Measure.} Similarly  to previous papers~\cite{sultani2018real,liu2018future,8803657,9102722,gong2019memorizing}, we use the frame-level area under the ROC curve (AUC) as the evaluation measure for all data sets. 
Moreover, following~\cite{Wu2020not}, we also use average precision (AP) as the evaluation measure for the XD-Violence data set. Larger AUC and AP values indicate better performance. Some recent studies~\cite{georgescu2020anomaly,ramachandra2020survey} recommend using the region-based detection criterion (RBDC) and the track-based detection criterion (TBDC) to complement the AUC measure, but these two measures are inapplicable in the weakly-supervised setting. Thus, we focus on the AUC and AP measures.

\subsection{Implementation Details} \label{subsec:imp}
Following~\cite{sultani2018real}, each video is divided into 32 video snippets, i.e., $T=32$.
For all experiments, we set the margin $m = 100$, $k = 3$ in~\eqref{eq:L2C}.
The three FC layers described in the model (Sec.~\ref{sec:architecture_overview}) have 512, 128 and 1 nodes, where each of those FC layers is followed by a ReLU activation function and a dropout function with a dropout rate of 0.7. The 2048D and 4096D features are extracted from the '$mix\_5c$' and '$fc\_6$' layer of the pre-trained I3D~\cite{kay2017kinetics} or C3D~\cite{KarpathyCVPR14} network, respectively. In MTN, we set the pyramid dilate rate as 1, 2 and 4, and we use the 3 $\times $ 1 Conv1D for each dilated convolution branch. For the self-attention block, we use a 1 $\times $ 1 Conv1D. 


Our RTFM method is trained in an end-to-end manner using the Adam optimiser~\cite{kingma2014adam} with a weight decay of 0.0005 and a batch size of 64 for 50 epochs. 
The learning rate is set to 0.001 for ShanghaiTech and UCF-Crime, and 0.0001 for XD-Violence. Each mini-batch consists of samples from 32 randomly selected normal and abnormal videos. The method is implemented using PyTorch~\cite{NEURIPS2019_9015}. 
For all baselines, we use the published results with the same backbone as ours. For a fair comparison, we use the same benchmark setup as in~\cite{sultani2018real,Wu2020not,zhong2019graph}.

\subsection{Results on ShanghaiTech}
The frame-level AUC results on ShanghaiTech are shown in Tab.~\ref{tab:sh_tech_table}. Our method RTFM achieves superior performance when compared with previous SOTA unsupervised learning methods~\cite{hasan2016learning,luo2017revisit,liu2018future,Park_2020_CVPR,yu2020cloze} and weakly-supervised approaches~\cite{9102722,8803657,zhong2019graph}. With I3D-RGB features, our model obtains the best AUC result on this data set: 97.21\%. Using the same I3D-RGB features, our RTFM-enabled MIL method outperforms current SOTA MIL-based methods~\cite{sultani2018real,8803657,9102722} by 10\% to 14\%. Our model outperforms~\cite{9102722} by more than 5\% even though they rely on a more advanced feature extractor (i.e., I3D-RGB and I3D Flow). These results demonstrate the gains achieved from our proposed feature magnitude learning.

Our method also outperforms the GCN-based weakly-supervised method~\cite{zhong2019graph} by  11.7\%, which indicates that our MTN module is more effective at capturing temporal dependencies than GCN. 
Additionally, considering the C3D-RGB features, our model achieves the SOTA AUC of 91.51\%, significantly surpassing the previous methods with C3D-RGB by a large margin.

 

\begin{table}[htbp]
\centering
\scalebox{0.7}{
\begin{tabular}{@{}c|c|c|c@{}}
\toprule\hline
Supervision       & Method      & Feature             & AUC(\%) \\ \hline\hline
                  & Conv-AE~\cite{hasan2016learning}     & -                   & 60.85   \\
                  & Stacked-RNN~\cite{luo2017revisit} & -                   & 68.00      \\
Unsupervised      & Frame-Pred~\cite{liu2018future}  & -                   & 73.40    \\
                  & Mem-AE~\cite{gong2019memorizing}      & -                   & 71.20    \\
                  & MNAD~\cite{Park_2020_CVPR}        & -                   & 70.50    \\
                  & VEC~\cite{yu2020cloze}         & -                   & 74.80    \\ \hline
                
                  & GCN-Anomaly~\cite{zhong2019graph} & C3D-RGB             & 76.44   \\
                  & GCN-Anomaly~\cite{zhong2019graph} & TSN-Flow            & 84.13   \\
                  & GCN-Anomaly~\cite{zhong2019graph} & TSN-RGB             & 84.44   \\
                  & Zhang et al.~\cite{8803657}         & I3D-RGB             & 82.50    \\
                  & Sultani et al.*~\cite{sultani2018real} & I3D RGB  & 85.33    \\
Weakly Supervised & AR-Net~\cite{9102722}      & I3D Flow            & 82.32   \\
                  & AR-Net~\cite{9102722}      & I3D-RGB             & 85.38   \\
                  & AR-Net~\cite{9102722}      & I3D-RGB \& I3D Flow & 91.24   \\
                  & Ours        & C3D-RGB             & \textcolor{blue}{\textbf{91.51}}   \\
                  & Ours        & I3D-RGB             & \textcolor{red}{\textbf{97.21}}  \\ \hline\bottomrule
\end{tabular}%
}
\caption{Comparison of frame-level AUC performance with other SOTA un/weakly-supervised methods on ShanghaiTech. 
* indicates we retrain the method in~\cite{sultani2018real} using I3D features. Best result in \textcolor{red}{\textbf{red}} and second best in \textcolor{blue}{\textbf{blue}}.} 
\label{tab:sh_tech_table}

\end{table}

\subsection{Results on UCF-Crime}

The AUC results on UCF-Crime are shown in Tab.~\ref{tab:ucf-crime}. Our method outperforms all previous unsupervised learning approaches~\cite{hasan2016learning,sohrab2018subspace,luo2017revisit,wang2019gods}. Remarkably, using the same I3D-RGB features, our method also  outperforms current SOTA MIL-based methods, Sultani et al.~\cite{sultani2018real} by 8.62\%, Zhang et al.~\cite{8803657} by 5.37\%, Zhu et al.~\cite{zhu2019motion} by 5.03\% and Wu et al.~\cite{Wu2020not} by 1.59\%.  Zhong et al.~\cite{zhong2019graph} use a computationally costly alternating training scheme
to achieve an AUC of 82.12\%, while our method utilises an efficient end-to-end training scheme
and outperforms their approach by 1.91\%. Our method also surpasses the current SOTA unsupervised methods, BODS and GODS~\cite{wang2019gods}, by at least 13\%. 
Considering the C3D features, our method surpasses the previous weakly supervised methods by a minimum 2.95\% and a maximum 7.87\%, indicating the effectiveness of our RTFM approach regardless of the backbone structure.

\begin{table}[htbp]
\centering
\scalebox{0.7}{
\begin{tabular}{@{}c|c|c|c@{}}
\toprule\hline
Supervision       & Method         & Feature  & AUC (\%) \\ \hline\hline
                  & SVM Baseline   & -        & 50.00       \\
                  & Conv-AE~\cite{hasan2016learning}   & -        & 50.60       \\
                  & Sohrab et al.~\cite{sohrab2018subspace}  & -        & 58.50     \\
Unsupervised      & Lu et al.~\cite{lu2013abnormal}  & C3D RGB        & 65.51    \\
                  & BODS~\cite{wang2019gods}           & I3D RGB  & 68.26    \\
                  & GODS~\cite{wang2019gods}           & I3D RGB  & 70.46    \\ \hline
                  & Sultani et al.~\cite{sultani2018real} & C3D RGB  & 75.41    \\
                  & Sultani et al.*~\cite{sultani2018real} & I3D RGB  & 77.92    \\
                  & Zhang et al.~\cite{8803657}            & C3D RGB  & 78.66    \\
                  & Motion-Aware~\cite{zhu2019motion} & PWC Flow & 79.00       \\
                  & GCN-Anomaly~\cite{zhong2019graph}    & C3D RGB  & 81.08    \\
Weakly Supervised & GCN-Anomaly~\cite{zhong2019graph}    & TSN Flow & 78.08    \\
                  & GCN-Anomaly~\cite{zhong2019graph}    & TSN RGB  & 82.12    \\
                  & Wu et al.~\cite{Wu2020not}      & I3D RGB  & 82.44    \\
                  & Ours           & C3D RGB  & \textcolor{blue}{\textbf{83.28}}   \\
                  & Ours           & I3D RGB  & \textcolor{red}{\textbf{84.30}}   \\ \hline\bottomrule
\end{tabular}%
}
\caption{Frame-level AUC performance on UCF-Crime. * indicates we retrain the method in~\cite{sultani2018real} using I3D features. Best result in \textcolor{red}{\textbf{red}} and second best in \textcolor{blue}{\textbf{blue}}.} 
\label{tab:ucf-crime}
\end{table}


\subsection{Results on XD-Violence}

XD-Violence is a recently released data set, on which few results have been reported, as displayed in Tab.~\ref{tab:xd-violence}. Our approach surpasses all unsupervised learning approaches by a minimum of 27.03\% in AP. 
Comparing with SOTA weakly-supervised methods~\cite{Wu2020not,sultani2018real}, our method is 2.4\% and 2.13\% better than Wu et al.~\cite{Wu2020not} and Sultani et al.~\cite{sultani2018real}, using the same I3D features. With the C3D features, our RTFM achieves the best 75.89\% AUC when compared with the MIL baseline by Sultani et al.~\cite{sultani2018real}. The consistent superiority of our method reinforces the effectiveness of our proposed feature magnitude learning method in enabling the MIL-based anomaly classification.

\begin{table}[htbp]
\centering
\scalebox{0.8}{
\begin{tabular}{@{}c|c|c|c@{}}
\toprule\hline
Supervision       & Method         & Feature          & AP(\%) \\ \hline\hline
                  & SVM baseline   & -                & 50.78  \\
Unsupervised      & OCSVM~\cite{scholkopf2000support}         & -                & 27.25  \\
                  & Hasan et al.~\cite{hasan2016learning}  & -                & 30.77  \\ \hline
                  & Sultani et al.~\cite{sultani2018real} & C3D RGB          & 73.20   \\
Weakly Supervised & Sultani et al.*~\cite{sultani2018real} & I3D RGB          & 75.68   \\
                  & Wu et al.~\cite{Wu2020not}      & I3D RGB          & 75.41 \\
                  & Ours           & C3D RGB          & \textcolor{blue}{\textbf{75.89}} \\
                  & Ours           & I3D RGB          & \textcolor{red}{\textbf{77.81}}  \\\hline\bottomrule
                 
\end{tabular}%
}
\caption{Comparison of AP performance with other SOTA un/weakly-supervised methods on XD-Violence. * indicates we retrain the method in~\cite{sultani2018real} using I3D features. Best result in \textcolor{red}{\textbf{red}} and second best in \textcolor{blue}{\textbf{blue}}.}
\label{tab:xd-violence}
\end{table}

\begin{figure*}[h!]
\begin{center}
\includegraphics[width=1.0\linewidth]{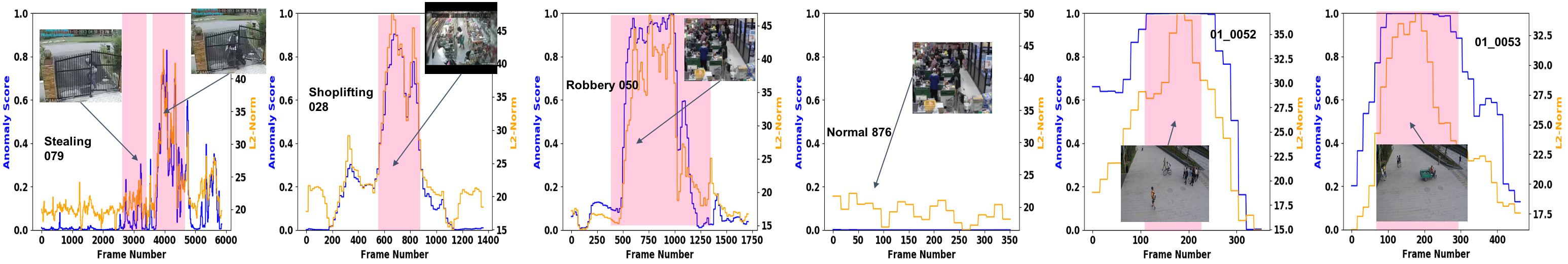}
\end{center}
   \caption{Anomaly scores and feature magnitude values of our method on UCF-Crime (\textit{stealing079},\textit{shoplifting028}, \textit{robbery050} \textit{normal876}), and ShanghaiTech (\textit{01\_0052}, \textit{01\_0053}) test videos.
   Pink areas indicate the manually labelled abnormal events. }
\label{fig:ucf_example}
\end{figure*}

\subsection{Results on UCSD-Peds}

We showed the result on UCSD-Ped2 in Tab.~\ref{tab:ucsd}, with TSN-Gray and I3D-RGB features, respectively. Our approach surpasses the previous SOTA~\cite{zhong2019graph} by a large 3.2\% with the same TSN-Gray features. Finally, we achieves the best 98.6\% mean AUC, surpassing Sultani et al.~\cite{sultani2018real} by 6.3\%, using the same I3D features.

\begin{table}[htbp]
\centering
\scalebox{0.75}{
\begin{tabular}{@{}c|c|c@{}}
\toprule\hline
Method         & Feature    & AUC  (\%)       \\ \hline\hline
GCN-Anomaly \cite{zhong2019graph}   & TSN-Flow            & 92.8  \\
GCN-Anomaly \cite{zhong2019graph}   & TSN-Gray             & 93.2   \\
Sultani et al.*\cite{sultani2018real}   & I3D RGB  &   92.3  \\ \hline
Ours        & TSN-Gray      & \textcolor{blue}{\textbf{96.5}}   \\
Ours        & I3D-RGB   & \textcolor{red}{\textbf{98.6}}  \\ \hline\bottomrule
\end{tabular}%
}
\caption{Comparison of AUC performance with other SOTA weakly-supervised methods on UCSD Ped2. * indicates we retrain the method in~\cite{sultani2018real} using I3D features. Best result in \textcolor{red}{\textbf{red}} and second best in \textcolor{blue}{\textbf{blue}}.}
\label{tab:ucsd}
\end{table}

\subsection{Sample Efficiency Analysis}

We investigate the sample efficiency of our method by looking into its performance w.r.t. the number of abnormal videos used for training on ShanghaiTech. We reduce the number of abnormal training videos from the original 63 videos down to 25 videos, with the normal training videos and test data fixed. The MIL method in ~\cite{sultani2018real} is used as a baseline. For a fair comparison, the same I3D features are used in both methods,  
and average AUC results ((computed from three runs using different random seeds)) are shown in Fig.~\ref{fig:num_abnormal}. As expected, the performance of both our method and Sultani et al.~\cite{sultani2018real} decreases with decreasing number of abnormal training videos, but the decreasing rate of our model is smaller that of than Sultani et al.~\cite{sultani2018real}, indicating the robustness of our RTFM. Remarkably, our method using only 25 abnormal training videos outperforms~\cite{sultani2018real} using all 63 abnormal videos by about 3\%, i.e., although our method uses 60\% less labelled abnormal training videos, it can still outperform Sultani et al.~\cite{sultani2018real}. This is because RTFM performs better recognition of the positive instances in the abnormal videos, and as a result, it can leverage the same training data more effectively than a MIL-based approach~\cite{sultani2018real}. Note that we retrain Sultani et al.'s method using the same I3D features.  

\begin{figure}[h!]
\begin{center}
\includegraphics[width=0.95\linewidth]{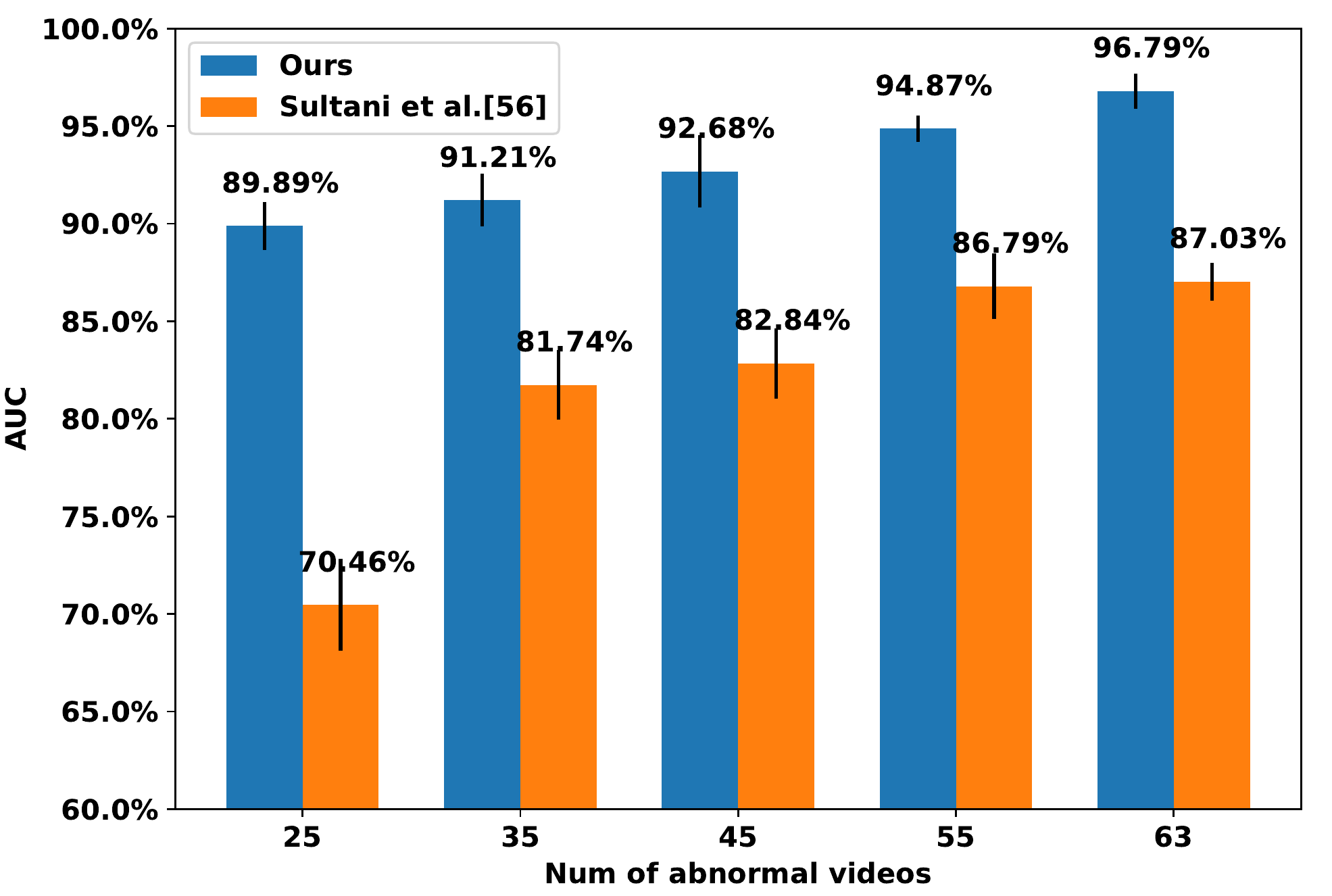}
\end{center}
   \caption{AUC w.r.t. the number of abnormal training videos. 
   }
\label{fig:num_abnormal}
\end{figure}

\subsection{Subtle Anomaly Discriminability}

We also examine the ability of our method to detect subtle abnormal events on the UCF-Crime dataset, by studying the AUC performance on each individual anomaly class. 
The models are trained on the full training data and we use~\cite{sultani2018real} as baseline, and results are shown in Fig.~\ref{fig:class_AUC}. Our model shows remarkable performance on human-centric abnormal events, even when the abnormality is very subtle. Particularly, our RTFM method outperforms Sultani et al.~\cite{sultani2018real} in 8 human-centric anomaly classes (i.e., arson, assault, burglary, robbery, shooting, shoplifting, stealing, vandalism), significantly lifting the AUC performance by 10\% to 15\% in subtle anomaly classes such as burglary, shoplifting, vandalism. This superiority is supported the theoretical results of RTFM that guarantee a good separability of the positive and negative instances. For the arrest, fighting, road accidents and explosion classes, our method shows competitive performance to~\cite{sultani2018real}. Our model is less effective in the abuse class because this class contains overwhelming human-centric abuse events in the training data but its testing videos contain animal abuse events only.

\begin{figure}[h!]
\begin{center}
\includegraphics[width=1.0\linewidth]{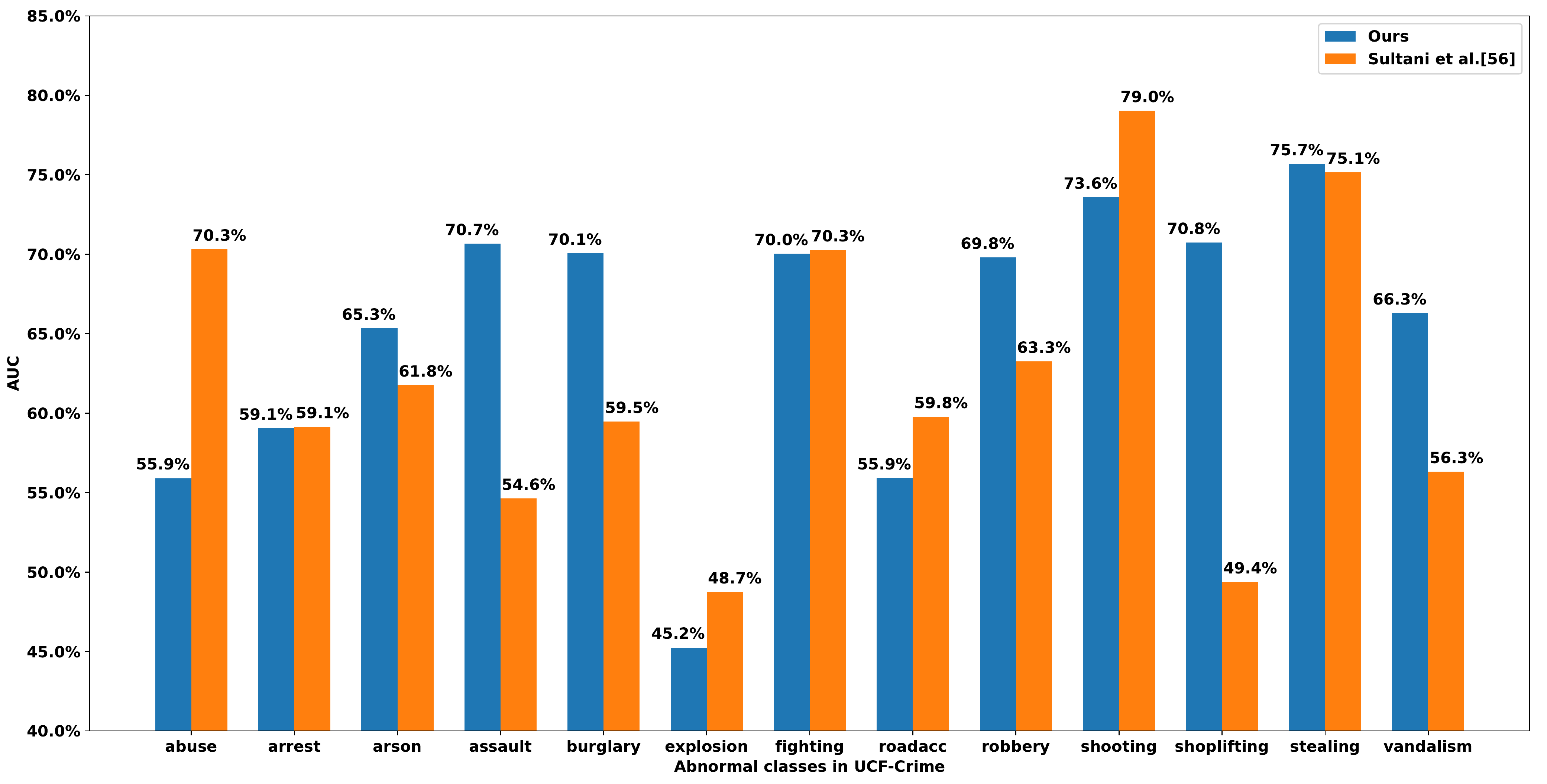}
\end{center}
  \caption{AUC results w.r.t. individual classes on UCF-Crime.
  }
\label{fig:class_AUC}
\end{figure}

\subsection{Ablation Studies}


We perform the ablation study on ShanghaiTech and UCF Crime with I3D features, as shown in Tab.~\ref{tab:ablation}, where the temporal feature mapping function $s_{\theta}$ is decomposed into PDC and TSA, and FM represents the feature magnitude learning from Sec. \ref{sec:top_k_MIL}. 
The baseline model
replaces PDC and TSA with a $1 \times 1$ convolutional layer and is trained with the original MIL approach as in~\cite{sultani2018real}. The resulting baseline achieves only 85.96\% AUC on ShanghaiTech and 77.32\% AUC on UCF Crime (a result similar to the one in~\cite{sultani2018real}).
By adding PDC or TSA, the AUC performance is boosted to 89.21\% and 91.73\% on ShanghaiTech and 79.32\% and 78.96\% on UCF, respectively. When both PDC and TSA are added, the AUC result increases to 92.32\% and 82.12\% for the two datasets, respectively. This indicates that PDC and TSA contributes to the overall performance, and they also complement each other in capturing both long and short-range temporal relations.
When adding only the FM module to the baseline, the AUC substantially increases by over 7\% and 4\% on ShanghaiTech and UCF Crime, respectively, indicating that our feature magnitude learning considerably improves over the original MIL method as it enables better exploitation of the labelled abnormal video data. Additionally, combining either PDC or TSA with FM helps further improve the performance. Then, the full model RTFM can achieve the best performance of 97.21\% and 84.30\% on the two datasets.  
An assumption made in theoretical motivation for RTFM is that the mean feature magnitudes for the top-$k$ abnormal feature snippets is larger than the ones for normal snippets.
We measure that on the testing videos of UCF-Crime and the mean magnitude of the top-$k$ snippets from abnormal videos is 53.4 and for normal, it is 7.7. This shows empirically that our our assumption for Theorem~\ref{thm:expected_separability} is valid and that RTFM can effectively maximise the separability between normal and abnormal video snippets.  This is further evidenced by the mean classification scores of 0.85 for the abnormal snippets and 0.13 for the normal snippets.

\begin{table}[htbp]
\centering
\scalebox{0.75}{
\begin{tabular}{cccc|cc}
\toprule\hline
Baseline & PDC & TSA & FM &  AUC (\%) - Shanghai  &  AUC (\%) - UCF\\ \hline \hline
\checkmark       &           &     &                    & 85.96 &77.39                    \\
\checkmark      & \checkmark         &     &               & 89.21   & 79.32                   \\
\checkmark       &    & \checkmark  &             & 91.73 & 78.96    \\
\checkmark       & \checkmark    & \checkmark       &               &92.32        & 82.12             \\ \hline
\checkmark       &    &   &\checkmark            & 92.99  & 81.28   \\
\checkmark       &    & \checkmark  & \checkmark            & 94.63   & 82.97  \\
\checkmark       & \checkmark    &   & \checkmark           & 93.91  & 82.58   \\\hline
\checkmark       & \checkmark    & \checkmark  & \checkmark       & 
97.21 & 84.30\\ \hline \bottomrule
\end{tabular}%
}
\caption{Ablation studies of our method on ShanghaiTech and UCF-Crime.}
\label{tab:ablation}
\end{table}

\subsection{Qualitative Analysis}

In Fig.~\ref{fig:ucf_example}, we show the anomaly scores produced by our MIL anomaly classifier for diverse test videos from UCF-Crime and ShanghaiTech. Three anomalous videos and one normal video from UCF-Crime are used (\textit{stealing079}, \textit{shoplifting028}, \textit{robbery050} and \textit{normal876}).  
As illustrated by the $\ell_2$-norm value curve (i.e., orange curves), our FM module can effectively produce a small feature magnitude for normal snippets and a large magnitude for abnormal snippets. Furthermore, our model can successfully ensure large margins between the anomaly scores of the normal and abnormal snippets (i.e., blank and pink shadowed areas, respectively). Our model is also able to detect multiple anomalous events in one video (e.g., \textit{stealing079}), which makes the problem more difficult. Also, for the anomalous events  $stealing$ and $shoplifting$, the abnormality is subtle and barely seen through the videos, but our model can still detect it. 
We also show the anomaly scores and feature magnitudes produced by our model for \textit{01\_0052} and \textit{01\_0053} from ShanghaiTech (last two figures in Fig.~\ref{fig:ucf_example}). Our model can effectively yield large anomaly scores for the anomalous event of vehicle entering in these two scenes.








\section{Conclusion}

We introduced a novel method, named RTFM, that enables top-$k$ MIL approaches for weakly supervised video anomaly detection. RTFM learns a temporal feature magnitude mapping function that 1) detects the rare abnormal snippets from abnormal videos containing many normal snippets, and 2) guarantees a large margin between normal and abnormal snippets. This improves the subsequent MIL-based anomaly classification in two major aspects: 1) our RTFM-enabled model learns more discriminative features that improve its ability in distinguishing complex anomalies (e.g., subtle anomalies) from hard negative examples; and 2) it also enables the MIL classifier to achieve significantly improved exploitation of the abnormal data. These two capabilities respectively result in better subtle anomaly discriminability and sample efficiency than current SOTA MIL methods. They are also the two main drivers for our model to achieve SOTA performance on all three large benchmarks.

{\small
\bibliographystyle{ieee_fullname}
\bibliography{egbib}
}

\newpage

\beginsupplement
\setcounter{section}{0}

\setcounter{equation}{0}
\setcounter{figure}{0}
\setcounter{table}{0}
\setcounter{page}{1}
\makeatletter
\renewcommand{\theequation}{S\arabic{equation}}
\renewcommand{\thefigure}{S\arabic{figure}}
\newpage
\section{Supplementary Material}
\subsection{Theoretical Motivation of RTFM}

\begin{theorem}[Expected Separability Between Abnormal and Normal Videos]
\label{thm:expected_separability}
Assuming that $\mathbb{E}[\|\mathbf{x}^+\|_2] \ge \mathbb{E}[\| \mathbf{x}^- \|_2]$, where $\mathbf{X}^+$ has $\mu$ abnormal samples and $(T-\mu)$ normal samples, where $\mu \in [1,T]$, and $\mathbf{X}^-$ has $T$ normal samples.
Let $D_{\theta,k}(.)$ be the random variable from which the separability scores $d_{\theta,k}(.)$ of Eq.3 in the main paper are drawn~\cite{li2015multiple}.
\begin{enumerate}
    \item If  $0 < k < \mu$, then
    $$0 \le \mathbb{E}[D_{\theta,k}(\mathbf{X}^+,\mathbf{X}^-)] \le \mathbb{E}[D_{\theta,k+1}(\mathbf{X}^+,\mathbf{X}^-)].$$
    \item For a finite $\mu$, then
    $$ \lim_{k \to \infty} \mathbb{E}[D_{\theta,k}(\mathbf{X}^+,\mathbf{X}^-)] = 0.$$
\end{enumerate}
\end{theorem}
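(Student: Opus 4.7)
The plan is to reduce the theorem to a direct calculation of $\mathbb{E}[D_{\theta,k}(\mathbf{X}^+,\mathbf{X}^-)] = \mathbb{E}[g_{\theta,k}(\mathbf{X}^+)] - \mathbb{E}[g_{\theta,k}(\mathbf{X}^-)]$ by identifying each top-$k$ mean magnitude with a convex combination of the two class-conditional expected magnitudes, weighted by the probability that a randomly picked element of $\Omega_k(\mathbf{X})$ is abnormal. Because $\mathbf{X}^-$ contains only normal snippets, the definition of $p_k^+$ forces $p_k^+(\mathbf{X}^-)=0$ and hence $\mathbb{E}[g_{\theta,k}(\mathbf{X}^-)] = \mathbb{E}[\|\mathbf{x}^-\|_2]$. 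For $\mathbf{X}^+$, by the probabilistic model introduced just before the theorem statement, a snippet in $\Omega_k(\mathbf{X}^+)$ is abnormal with probability $p_k^+(\mathbf{X}^+)=\min(\mu,k)/(k+\epsilon)$ and normal otherwise, so linearity of expectation applied inside the top-$k$ average yields
\begin{equation*}
\mathbb{E}[g_{\theta,k}(\mathbf{X}^+)] = p_k^+(\mathbf{X}^+)\,\mathbb{E}[\|\mathbf{x}^+\|_2] + \bigl(1-p_k^+(\mathbf{X}^+)\bigr)\,\mathbb{E}[\|\mathbf{x}^-\|_2].
\end{equation*}

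Subtracting the two expressions collapses the separability to the identity
\begin{equation*}
\mathbb{E}[D_{\theta,k}(\mathbf{X}^+,\mathbf{X}^-)] = p_k^+(\mathbf{X}^+)\bigl(\mathbb{E}[\|\mathbf{x}^+\|_2]-\mathbb{E}[\|\mathbf{x}^-\|_2]\bigr),
\end{equation*}
which is the workhorse for both claims. Non-negativity in part (1) is then immediate: $p_k^+(\mathbf{X}^+)\ge 0$ by construction and the assumed magnitude gap $\mathbb{E}[\|\mathbf{x}^+\|_2]\ge\mathbb{E}[\|\mathbf{x}^-\|_2]$ makes the product non-negative. For the monotonicity, when $0<k<\mu$ we have $\min(\mu,k)=k$, so $p_k^+(\mathbf{X}^+)=k/(k+\epsilon)$; the map $k\mapsto k/(k+\epsilon)$ is strictly increasing for any fixed $\epsilon>0$ (its derivative is $\epsilon/(k+\epsilon)^2>0$), and multiplying by the non-negative factor $\mathbb{E}[\|\mathbf{x}^+\|_2]-\mathbb{E}[\|\mathbf{x}^-\|_2]$ preserves the inequality, giving $\mathbb{E}[D_{\theta,k}]\le\mathbb{E}[D_{\theta,k+1}]$. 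For part (2), a finite $\mu$ means that once $k\ge\mu$ the numerator of $p_k^+(\mathbf{X}^+)$ freezes at $\mu$, so $p_k^+(\mathbf{X}^+) = \mu/(k+\epsilon)\to 0$ as $k\to\infty$, and the identity above forces $\mathbb{E}[D_{\theta,k}]\to 0$.

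The main obstacle I expect is justifying the convex-combination representation in the first step, since $g_{\theta,k}$ involves a non-linear $\max$ over subsets and one cannot push an expectation through it naively. My plan is to treat the decomposition as a direct consequence of the generative model stipulated just before the theorem, in which the class composition of $\Omega_k(\mathbf{X}^+)$ is governed by a Bernoulli-type indicator with parameter $p_k^+(\mathbf{X}^+)$ and the class-conditional expected magnitude of a selected snippet coincides with the unconditional one $\mathbb{E}[\|\mathbf{x}^{\pm}\|_2]$. If a more careful treatment is demanded, I would condition on the number $N_k$ of abnormal snippets landing in $\Omega_k(\mathbf{X}^+)$, write $k\,g_{\theta,k}(\mathbf{X}^+) = \sum_{\mathbf{x}_t\in\Omega_k(\mathbf{X}^+)} \|\mathbf{x}_t\|_2$, split the sum by class, apply linearity inside each class, and then take the outer expectation over $N_k$ using $\mathbb{E}[N_k]=k\,p_k^+(\mathbf{X}^+)$; the algebra that follows is identical to the calculation above, so the rest of the proof proceeds unchanged.
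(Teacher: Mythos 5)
Your proposal is correct and follows essentially the same route as the paper: both write $\mathbb{E}[D_{\theta,k}]$ as the convex-combination decomposition $p^+_{k}(\mathbf{X}^+)\,\mathbb{E}[\|\mathbf{x}^+\|_2] + \bigl(1-p^+_{k}(\mathbf{X}^+)\bigr)\,\mathbb{E}[\|\mathbf{x}^-\|_2] - \mathbb{E}[\|\mathbf{x}^-\|_2] = p^+_{k}(\mathbf{X}^+)\bigl(\mathbb{E}[\|\mathbf{x}^+\|_2]-\mathbb{E}[\|\mathbf{x}^-\|_2]\bigr)$ and then invoke the monotonicity of $p^+_k(\mathbf{X}^+)=k/(k+\epsilon)$ for $k<\mu$ and its vanishing as $k\to\infty$ for finite $\mu$. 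Your extra remarks on justifying the decomposition (conditioning on the number of abnormal snippets in $\Omega_k(\mathbf{X}^+)$) are in fact more careful than the paper's proof, which takes this step for granted from the stipulated definition of $p^+_k$.
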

\begin{proof}
\begin{equation}
\begin{split}
\mathbb{E}&[D_{\theta,k}(\mathbf{X}^+,\mathbf{X}^-)] = \mathbb{E}[g_{\theta,k}(\mathbf{X}^+)]-\mathbb{E}[g_{\theta,k}(\mathbf{X}^-)] \\
&=p^+_{k}(\mathbf{X}^+) \mathbb{E}[\|\mathbf{x}^+\|_2] + p^-_{k}(\mathbf{X}^+) \mathbb{E}[\| \mathbf{x}^-\|_2] 
- \mathbb{E}[\| \mathbf{x}^- \|_2]
\end{split}
\label{eq:proof_expected}
\end{equation}

\begin{enumerate}
\item Trivial given that
$\mathbb{E}[\|\mathbf{x}^+\|_2] \ge \mathbb{E}[\|\mathbf{x}^-\|_2]$ and that $p^+_{k+1}(\mathbf{X}^+) > p^+_{k}(\mathbf{X}^+)$ for $0 < k < \mu$
\item Trivial given that as $\mu$ is finite, $\lim_{k \to \infty }p^+_{k}(\mathbf{X}^+) = 0$.
\end{enumerate}
\end{proof}

\textbf{Intuition of feature magnitude}: Assuming the expected magnitude of abnormal samples is larger than of normal samples, we can derive Thm. 3.1 that proves that the expected feature magnitude-based separability score between normal and abnormal videos grows for $0<k<\mu$ and reduces to zero for $k \to \infty$. Hence, to use Thm. 3.1, we need to enforce larger magnitude for abnormal features using our proposed RTFM. The similarity between the theoretical and empirical curves in Fig.\ref{fig:k_fig}(left) is evidence of the soundness of Thm. 3.1.

\subsection{Multi-scale Temporal Feature Learning}

Our proposed multi-scale temporal network (MTN) captures the multi-resolution local temporal dependencies and the global temporal dependencies between video snippets, as displayed in Fig.~\ref{fig:MTN}.

\begin{figure}[t!]
\begin{center}
\includegraphics[width=1.0\linewidth]{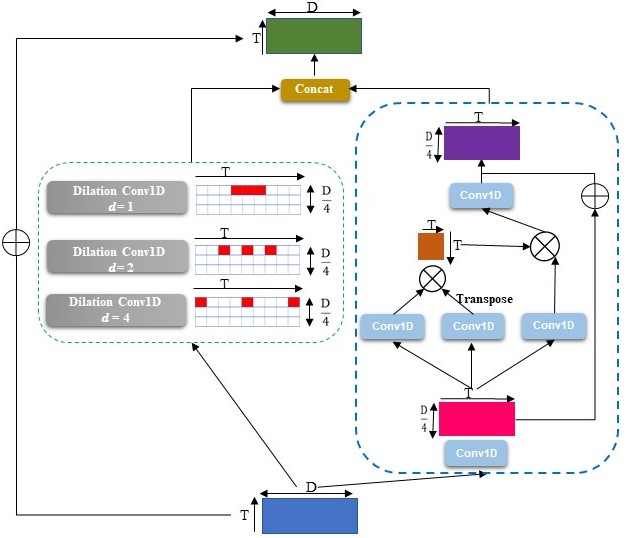}
\end{center}
  \caption{Our proposed MTN consists of two modules. The module on the left uses the pyramid dilated convolutions to capture the local consecutive snippets dependency over different temporal scales. The module on the right relies on a self-attention network to compute the global temporal correlations. The features from the two modules are concatenated to produce the MTN output.}
\label{fig:MTN}
\end{figure}

\subsection{Computational Efficiency}

We investigate if our system can run in real time. During inference, our method processes a 16-frame clip in 0.76 seconds on a Nvidia 2080Ti--this time includes the I3D extraction time. This indicates that our system can achieve good real-time detection in real-world applications. 

\subsection{Temporal Dependency}
Temporal Dependency has been explored in~\cite{liu2018future,luo2017revisit,liu2019margin,zhong2019graph,Wu2020not,kratz2009anomaly,xu2014video}. 
In anomaly detection, traditional methods~\cite{kratz2009anomaly,xu2014video} convert consecutive frames into handcrafted motion trajectories to capture the local consistency between neighbouring frames. 
Diverse temporal dependency modelling methods have been used in deep anomaly detection approaches, such as stacked RNN~\cite{luo2017revisit}, temporal consistency in future frame prediction~\cite{liu2018future}, and convolution LSTM~\cite{liu2019margin}.
However, these methods capture short-range fixed-order temporal correlations only with single temporal scale, ignoring the long-range dependency from all possible temporal locations and the events with varying temporal length. GCN-based methods are explored in~\cite{zhong2019graph,Wu2020not} to capture the long-range dependency from snippets features, but they are  inefficient and hard to train.
By contrast, our proposed module combines 
PDC~\cite{yu2015multi} and TSA~\cite{wang2018non} on the temporal dimension to seamlessly and efficiently incorporate both the long and short-range temporal dependencies into our temporal feature ranking loss (See Sec.~\ref{sec:temporal}). 

\begin{figure}[h!]
\begin{center}
\includegraphics[width=0.98\linewidth]{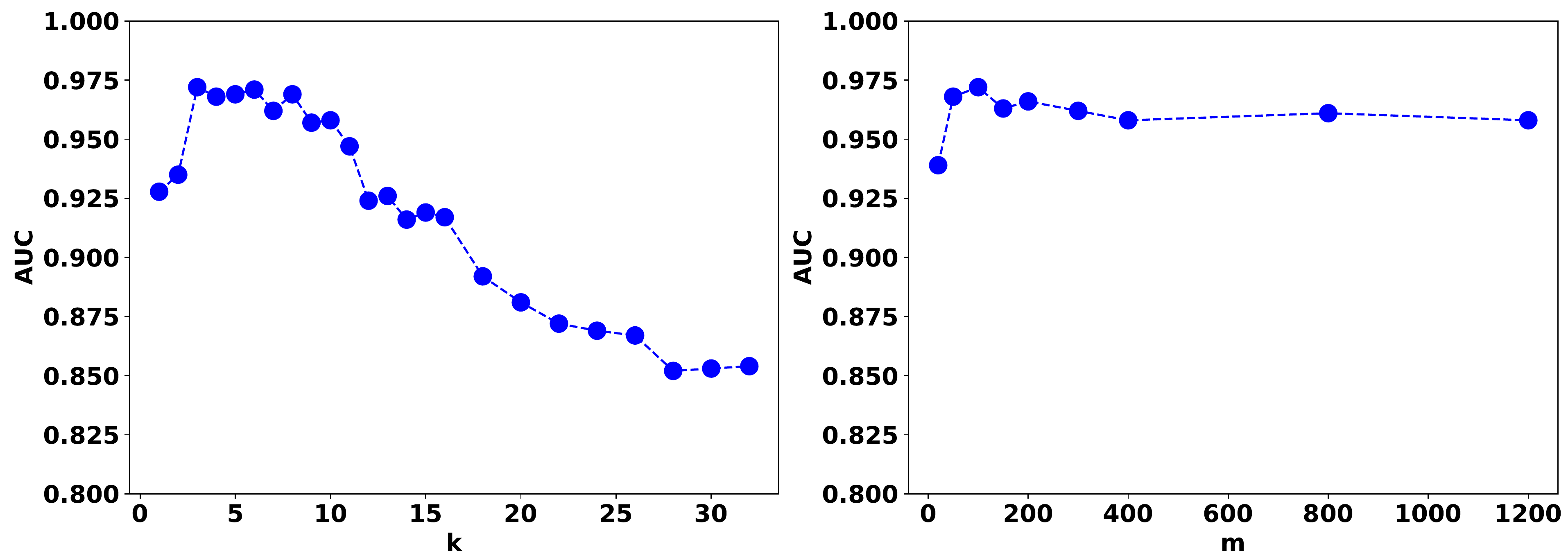}
\end{center}
 \caption{AUC w.r.t. top-$k$ (\textbf{Left}) and the margin $m$ (\textbf{Right}).  
 }
\label{fig:k_fig}
\end{figure}

\subsection{Ablations for $k$ and $m$}
We show the AUC results as a function of top-$k$ and margin $m$ values on ShanghaiTech in Fig.\ref{fig:k_fig}. Consistent to our theoretical analysis, the performance of our model peaks at a sufficiently large $k$, flattens at around $k \approx \mu$ and then drops with increasing  $k$ (Fig.\ref{fig:k_fig}(left)). It is also robust to a large range of $m\in[50,1200]$ with a stable AUC in $[93\%,96\%]$ (Fig.\ref{fig:k_fig}(right)).

\end{document}